\def\1{\bm{1}}
\def\rb{{\textnormal{b}}}
\def\rq{{\textnormal{q}}}
\def\rvtheta{{\mathbf{\theta}}}
\DeclareMathAlphabet{\mathsfit}{\encodingdefault}{\sfdefault}{m}{sl}
\SetMathAlphabet{\mathsfit}{bold}{\encodingdefault}{\sfdefault}{bx}{n}
\DeclareMathOperator*{\argmax}{arg\,max}
\newcommand*\diff{\mathop{}\!\mathrm{d}}
\theoremstyle{definition}
\newtheorem{thm}{Theorem}
\newtheorem{prop}{Proposition}
\newtheorem{lem}{Lemma}
\newtheoremstyle{TheoremNum}
    {\topsep}{\topsep}              
    {}                      
    {}                              
    {\bfseries}                     
    {.}                             
    { }                             
    {\thmname{#1}\thmnote{ \bfseries #3}}
\theoremstyle{TheoremNum}
\newtheorem{lemn}{Lemma}
\long\def\comment#1{}
\newcommand{\Tp}{\ensuremath{\mathcal{T}}}
\DeclareMathOperator*{\card}{card}
\def\P{\mathbb P}
\title{Statistical Model Aggregation via\\ Parameter Matching}
\author{%
  Mikhail Yurochkin$^{1,2}$ \\
  \texttt{mikhail.yurochkin@ibm.com} \\
  \And
  Mayank Agarwal$^{1,2}$ \\
  \texttt{mayank.agarwal@ibm.com} \\
  \And
  Soumya Ghosh$^{1,2,3}$ \\
  \texttt{ghoshso@us.ibm.com} \\
  \AND
  Kristjan Greenewald$^{1,2}$ \\
  \texttt{kristjan.h.greenewald@ibm.com} \\
  \And
  Trong Nghia Hoang$^{1,2}$ \\
  \texttt{nghiaht@ibm.com} \\
    \AND
  \normalfont{IBM Research,$^1$ MIT-IBM Watson AI Lab,$^2$ Center
for Computational Health$^3$.
}
}
\begin{document}

\maketitle

\begin{abstract}

We consider the problem of aggregating models learned from sequestered, possibly heterogeneous datasets. Exploiting tools from Bayesian nonparametrics, we develop a general meta-modeling framework that learns shared global latent structures by identifying correspondences among local model parameterizations. Our proposed framework is model-independent and is applicable to a wide range of model types. After verifying our approach on simulated data, we demonstrate its utility in aggregating Gaussian topic models, hierarchical Dirichlet process based hidden Markov models, and sparse Gaussian processes with applications spanning text summarization, motion capture analysis, and temperature forecasting.\footnote{Code: \url{https://github.com/IBM/SPAHM}}

\end{abstract}

\section{Introduction}

%

One is often interested in learning from groups of heterogeneous data produced by related, but unique, generative processes. For instance, consider the problem of discovering shared topics from a collection of documents, or extracting common patterns from physiological signals of a cohort of patients. Learning such shared representations can be relevant to many heterogeneous, federated, and transfer learning tasks. Hierarchical Bayesian models~\cite{blei2003latent, fox2009sharing, teh2005sharing} are widely used for performing such analyses, as they are able to both naturally model heterogeneity in data and share statistical strength across heterogeneous groups.  

However, when the data is large and scattered across disparate silos, as is increasingly the case in many real-world applications, use of standard hierarchical Bayesian machinery becomes fraught with difficulties. In addition to costs associated with moving large volumes of data, the computational cost of full Bayesian inference may be prohibitive. Moreover, pooling sequestered data may also be undesirable owing to concerns such as privacy~\cite{eu:gdpr}. 
While distributed variants~\cite{hasenclever2017distributed} have been developed, they require frequent communication with a central server and hence are restricted to situations where sufficient communication bandwidth is available. Yet others~\cite{mcmahan2017communication} have proposed federated learning algorithms to deal with such scenarios. However, these algorithms tend to be bespoke and can require significant modifications based on the models being federated.  

Motivated by these challenges, in this paper we develop Bayesian nonparametric \emph{meta-models} that are able to coherently combine models trained on independent partitions of data (model fusion). Relying on tools from Bayesian nonparametrics (BNP), our meta model treats the parameters of the locally trained models as noisy realizations of latent global parameters, of which there can be infinitely many. The generative process is formally characterized through a Beta-Bernoulli process (BBP)~\cite{thibaux2007hierarchical}. Model fusion, rather than being an ad-hoc procedure, then reduces to posterior inference over the meta-model. Governed by the BBP posterior, the meta-model allows local parameters to either match existing global parameters or create \emph{new} ones. This ability to grow or shrink the number of parameters is crucial for combining local models of varying complexity -- for instance, hidden Markov models with differing numbers of states.  

Our construction provides several key advantages over alternatives in terms of scalability and flexibility. First, scaling to large data through parallelization is trivially easy in our framework. One would simply train the local models in parallel and fuse them. Armed with a Hungarian algorithm-based efficient MAP inference procedure for the BBP model, we find that our train-in-parallel and fuse scheme affords significant speedups. Since our model fusion procedure is independent of the learning and inference algorithms that may have been used to train individual models, we can seamlessly combine models trained using disparate algorithms. Furthermore, since we only require access to trained local models and not the original data, our framework is also applicable in cases where only pre-trained models are available but not the actual data, a setting that is difficult for existing federated or distributed learning algorithms. 

Finally, we note that our development is largely agnostic to the form of the local models and is reusable across a wide variety of domains. In fact, up to the choice of an appropriate base measure to describe the local parameters, the exact same algorithm can be used for fusion across qualitatively different settings. We illustrate this flexibility by demonstrating proficiency at combining a diverse class of models, which include sparse Gaussian processes, mixture models, topic models and hierarchical Dirichlet process based hidden Markov models.     
\section{Background and Related Work}
\label{sec:related}
Here, we briefly review the building blocks of our approach and highlight the differences of our approach from existing work.
%
\paragraph{Indian Buffet Process and the Beta Bernoulli Process}
The Indian buffet process (IBP) specifies a distribution over sparse binary matrices with infinitely many columns~\citep{ghahramani2005infinite}. It is commonly described through the following culinary metaphor. Imagine $J$ customers arrive sequentially at a buffet and choose dishes to sample. The first customer to arrive samples Poisson$(\gamma_0)$ dishes. The $j$-th subsequent customer then tries each of the dishes selected by previous customers with probability proportional to the dish's popularity, and then additionally samples Poisson$(\gamma_0/j)$ new dishes that have not yet been sampled by any customer. 
\citet{thibaux2007hierarchical} showed that the de Finetti mixing distribution underlying the IBP is a Beta Bernoulli Process (BBP). Let $Q$ be a random measure drawn from a Beta process, $Q \mid \alpha, \gamma_0, H \sim \text{BP}(\alpha, \gamma_0 H)$, with mass parameter $\gamma_0$, base measure $H$ over $\Omega$ such that $H(\Omega) = 1$ and concentration parameter $\alpha$. It can be shown $Q$ is a discrete measure $Q = \sum_i \rq_i \delta_{\rvtheta_i}$ formed by an infinitely countable set of (weight, atom) pairs $(\rq_i,\rvtheta_i) \in [0,1]\times\Omega$. The weights $\{\rq_i\}_{i=1}^\infty$ are distributed by a stick-breaking process \citep{teh2007stick}, $\nu_1 \sim \text{Beta}(\gamma_0,1),\, \nu_i = \prod_{j=1}^i  \nu_j$ and the atoms $\rvtheta_i$ are drawn i.i.d. from  $H$. Subsets of atoms in $Q$ are then selected via a Bernoulli process. That is, each subset $\Tp_j$ with $j = 1, \ldots, J$ is characterized by a Bernoulli process with base measure $Q$, $\Tp_j\mid Q \sim \text{BeP}(Q)$. Consequently, subset $\Tp_j$ is also a discrete measure formed by pairs $(\rb_{ji},\rvtheta_i) \in \{0,1\}\times\Omega$,
$\Tp_j  := \sum_i \rb_{ji} \delta_{\rvtheta_i}\text{, where } \rb_{ji}\mid \rq_i \sim \text{Bernoulli}(\rq_i)\,\forall i$ is a binary random variable indicating whether atom $\theta_i$ belongs to subset $\Tp_j$. The collection of such subsets is then said to be distributed by a Beta-Bernoulli process. Marginalizing over the Beta Process distributed $Q$ we recover the predictive distribution,
$\Tp_J \mid  \Tp_1,\ldots,\Tp_{J-1} \sim \text{BeP}\left(\frac{\alpha\gamma_0}{J+\alpha-1}H + \sum_i\frac{m_i}{J+\alpha-1}\delta_{\rvtheta_i}\right)$,
where $m_i = \sum_{j=1}^{J-1} \rb_{ji}$ (dependency on $J$ is suppressed for notational simplicity) which can be shown to be equivalent to the IBP.  
%
Our work is related to recent advances~\citep{yurochkin2019sddm} in efficient BBP MAP inference.

\paragraph{Distributed, Decentralized, and Federated Learning}
Similarly to us, federated and distributed learning approaches also attempt to learn from sequestered data. These approaches roughly fall into two groups, those~\cite{Marc15, Yarin14, NghiaICML16, NghiaAAAI18, NghiaAAAI19} that decompose a global, centralized learning objective into localized ones that can be optimized separately using local data, and those that iterate between training local models on private data sources and distilling them into a global model~\cite{dean2012large,campbell2015streaming, hasenclever2017distributed, mcmahan2017communication}. The former group carefully exploits properties of the local models being combined. It is unclear how methods developed for a particular class of local models (for example, Gaussian processes) can be adapted to a different class of models (say, hidden Markov models). More recently, \cite{yurochkin2019bayesian} also exploited a BBP construction for federated learning, but were restricted to only neural networks. Alternatively, \cite{hoang2019modelfusion} follows a different development that requires local models of different classes to be distilled into the same class of surrogate models before aggregating them, which, however, accumulates local distillation error (especially when the number of local models is large). Members of the latter group require frequent communication with a central server, are poorly suited to bandwidth limited cases, and are not applicable when the pretrained models cannot share their associated data. Others~\cite{CampbellUAI14} have proposed decentralized approximate Bayesian algorithms. However, unlike us, they assume that each of the local models have the same number of parameters, which is unsuitable for federating models with different complexities. 





\section{Bayesian Nonparametric Meta Model}
\label{sec:exp_model}
We propose a Bayesian nonparametric meta model based on the Beta-Bernoulli process \citep{thibaux2007hierarchical}. In seeking a ``meta model'', our goal will be to describe a model that generates collections of parameters that describe the local models. This meta model can then be used to infer the parameters of a global model from a set of local models learned independently on private datasets.

Our key assumption is that there is an unknown shared set of parameters of unknown size across datasets, which we call global parameters, and we are able to learn subsets of noisy realizations of these parameters from each of the datasets, which we call local parameters. The noise in local parameters is motivated by estimation error due to finite sample size and by variations in the distributions of each of the datasets. Additionally, local parameters are allowed to be permutation invariant, which is the case in a variety of widely used models (e.g., any mixture or an HMM).

We start with Beta process prior on the collection of global parameters, $G \sim \text{BP}(\alpha,\gamma_0 H)\text{ then }G = \sum_i p_i \delta_{\theta_i},\ \theta_i \sim H$, %
where $H$ is a base measure, $\theta_i$ are the global parameters, and $p_i$ are the stick breaking weights.
To devise a meta-model applicable to broad range of existing models, we do not assume any specific base measure and instead proceed with general exponential family base measure,
\begin{equation}
p_{\theta}(\theta\mid \tau, n_0) = \mathcal{H}(\tau, n_0)\exp(\tau^T \theta - n_0 \mathcal{A}(\theta)).
\label{eq:global_exp}
\end{equation}
Local models do not necessarily have to use all global parameters, e.g. a Hidden Markov Model for a given time series may only contain a subset of latent dynamic behaviors observed across collection of time series. We use a Bernoulli process to allow $J$ local models to select a subset of global parameters, 
\begin{equation}
\label{eq:bern}
Q_j \mid  G \sim \text{BeP}(G) \text{ for } j=1,\ldots,J.
\end{equation}
Then $Q_j = \sum_i b_{ji} \delta_{\theta_i}$, where $b_{ji}\mid p_i \sim \text{Bern}(p_i)$ is a random measure representing the subset of global parameters characterizing model $j$. We denote the corresponding subset of indices of the global parameters induced by $Q_j$ as $\mathcal{C}_j = \{i: b_{ji} = 1\}$. The noisy, permutation invariant local parameters estimated from dataset $j$ are modeled as,
\begin{equation}
\label{eq:local_parameters}
v_{jl}\mid \theta_{c(j,l)} \sim F(\cdot\mid \theta_{c(j,l)})\text{ for }l=1,\ldots,L_j,
\end{equation}
where $L_j = \card(\mathcal{C}_j)$ and $c(j,l):\{1,\ldots,L_J\} \rightarrow \mathcal{C}_j$ is an unknown mapping of indices of local parameters to indices of global parameters corresponding to dataset $j$. Parameters of different models of potential interest may have different domain spaces and domain-specific structure. To preserve generality of our meta-modeling approach we again consider a general exponential family density for the local parameters,
\begin{equation}
\label{eq:local_exp}
p_v(v\mid \theta) = h(v)\exp(\theta^T T(v) - \mathcal{A}(\theta)).
\end{equation}
where $T(\cdot)$ is the sufficient statistic function.
\paragraph{Interpreting the model.}
We emphasize that our construction describes a \emph{meta model}, in particular it describes a generative process for the parameters of the local models rather than the data itself. These parameters are ``observed'' either when pre-trained local models are made available or when the local models are learned independently and potentially in parallel across datasets. The meta model then infers shared latent structure among the datasets. The Beta process concentration parameter $\alpha$ controls the degree of sharing across local models while the mass parameter $\gamma_0$ controls the number of global parameters. The interpretation of the exponential family parameters, $\tau$ and $n_0$, depends on the choice of the particular exponential family. We provide a concrete example with the Gaussian distribution in Section \ref{sec:Gauss}.

Several prior works \citep{dutta2016bayesian, heikkila2017differentially, blomstedt2019meta} explore the meta modeling perspective. The key difference with our approach is that we consider broader model class allowing for inherent \emph{permutation invariant} structure of the parameter space, e.g. mixture models, topic models, hidden Markov models and sparse Gaussian processes. The aforementioned approaches are only applicable to models with natural parameter ordering, e.g. linear regression, which is a simpler special case of our construction. Permutation invariance leads to inferential challenges associated with finding correspondences across sets of local parameters and learning the size of the global model, which we address in the next section.


\section{Efficient Meta Model Inference}
\label{sec:inference}
Taking the optimization perspective, our goal is to maximize the posterior probability of the global parameters given the local ones. Before discussing the objective function we re-parametrize \eqref{eq:local_exp} to side-step the index mappings $c(\cdot,\cdot)$ as follows:
\begin{equation}
\label{eq:assignment_parametrize}
    \begin{split}
        v_{jl}\mid  B, \theta \sim F\left(\cdot\mid\sum_i B^j_{il}\theta_i\right)
        \text{ s.t. }  \sum_{i} B^j_{il} = 1\text{, }b_{ji} = \sum_{l} B^j_{il} \in \{0,1\},
    \end{split}
\end{equation}
where $B = \{B_{il}^j\}_{i,j,l}$ are the assignment variables such that $B_{il}^j = 1$ denotes that $v_{jl}$ is matched to $\theta_i$, i.e. $v_{jl}$ is the local parameter realization of the global parameter $\theta_i$; $B_{il}^j = 0$ implies the opposite.

The objective function is then $\P(\theta, B\mid v,\Theta)$, where $\Theta=\{\tau,n_0\}$ are the hyperparameters and indexing is suppressed for simplicity. In the context of our meta model this problem has been studied when distributions in \eqref{eq:global_exp} and \eqref{eq:local_exp} are Gaussian \citep{yurochkin2019bayesian} or von Mises-Fisher \citep{yurochkin2019sddm}, which are both special cases of our meta model. However, this objective requires $\Theta$ to be chosen a priori leading to potentially sub-optimal solutions or to be selected via expensive cross-validation.

We show that it is possible to simplify the optimization problem via integrating out $\theta$ and jointly learn hyperparameters and matching variables $B$, all while maintaining the generality of our meta model. Define $Z_i = \{(j,l)\mid B^j_{il} = 1\}$ to be the index set of the local parameters assigned to the $i$th global parameter, then the objective functions we consider is,
\begin{small}
\begin{equation}
\label{eq:marginal}
\begin{split}
    & \mathcal{L}(B,\Theta) = \P(B\mid v) \propto \P(B) \int p_v(v\mid B,\theta) p_\theta(\theta) \diff \theta = \P(B) \prod_i \int \prod_{z \in Z_i} p_v(v_z \mid \theta_i) p_\theta(\theta_i) \diff \theta_i \\
    & = \P(B) \prod_i \mathcal{H}(\tau, n_0) \int \left(\prod_{z \in Z_i} h(v_z) \right) \exp\left( (\tau + \sum_{z \in Z_i} T(v_z))^T ) \theta_i - (\card (Z_i) + n_0 ) \mathcal{A}(\theta)\right) \diff \theta_i \\
    & = \P(B) \prod_i \frac{\mathcal{H}(\tau, n_0)\prod_{z \in Z_i} h(v_z)}{\mathcal{H}(\tau + \sum_{z \in Z_i} T(v_z), \card(Z_i) + n_0)},
\end{split}
\end{equation}
\end{small}
Holding $\Theta$ fixed, and then taking the logarithm and noting that $\sum_i \sum_{j,l} B^j_{il}\log h(v_{jl})$ is constant in $B$, we wish to maximize,
\begin{small}
\begin{equation}
\label{eq:B_objective}
\begin{split}
    \mathcal{L}_\Theta(B) = \log \P(B) - \sum_i \log \mathcal{H}\left(\tau + \sum_{j,l} B^j_{il} T(v_{jl}), \sum_{j,l} B^j_{il} + n_0\right),
    \end{split}
\end{equation}
\end{small}
where we have used $\mathcal{L}_\Theta(B)$ to denote the objective when $\Theta$ is held constant.
Despite the large number of discrete variables, we show that $\mathcal{L}_\Theta(B)$ admits a reformulation that permits efficient inference by iteratively solving small sized linear sum assignment problems (e.g., the Hungarian algorithm~\citep{kuhn1955hungarian}).

We consider iterative optimization where we optimize the assignments $B^{j_0}$ for some group $j_0$ given that the assignments for all other groups, denoted $B^{\setminus j_0}$, are held fixed. Let $m_i = \sum_{j,l}B^j_{il}$ denote number of local parameters assigned to the global parameter $i$, $m^{\setminus j_0}_i = \sum_{j\neq j_0,l}B^j_{il}$ be the same outside of group $j_0$ and let $L_{\setminus j_0}$ denote the number of unique global parameters corresponding to the local parameters outside of $j_0$. The corresponding objective functions are given by
\begin{small}
\begin{equation}
\label{eq:tilde_j}
\begin{split}
\mathcal{L}_{B^{\setminus j_0},\Theta}(B^{j_0}) &= \log \P(B^{j_0}\mid B^{\setminus j_0})  \\ &-\sum_{i=1}^{L_{\setminus j_0} + L_{j_0}} \log \mathcal{H}\left(\tau + \sum_l B^{j_0}_{il}T(v_{j_0 l}) + \sum_{j\neq j_0,l} B^j_{il} T(v_{jl}), \sum_l B^j_{il} + m^{\setminus j_0}_i + n_0\right).
\end{split}
\end{equation}
\end{small}
To arrive at a form of a linear sum assignment problem we define a \emph{subtraction trick}:
\begin{prop}[Subtraction trick]
When $\sum_l B_{il} \in \{0,1\}$ and $B_{il}\in \{0,1\}$ for $\forall\, i,l$, optimizing $\sum_i f(\sum_l B_{il} x_l + C)$ for $B$ is equivalent to optimizing $\sum_{i,l} B_{il} (f(x_l + C) - f(C))$ for any function $f$, $\{x_l\}$ and $C$ independent of $B$.
\end{prop}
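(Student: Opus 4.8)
The plan is to reduce the claim to a pointwise identity, one term of the outer sum at a time, exploiting the constraint $\sum_l B_{il}\in\{0,1\}$. Fix an index $i$ and look at the single summand $f\!\left(\textstyle\sum_l B_{il}x_l + C\right)$. The constraint leaves exactly two possibilities for the $i$-th ``row'' of $B$: either $B_{il}=0$ for every $l$, in which case $\sum_l B_{il}x_l = 0$ and the summand equals $f(C)$; or there is a unique $l^\star$ with $B_{il^\star}=1$ and $B_{il}=0$ otherwise, in which case $\sum_l B_{il}x_l = x_{l^\star}$ and the summand equals $f(x_{l^\star}+C)$.

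Next I would verify that both cases are captured by the single expression that is affine in $B$,
\[
f\!\left(\textstyle\sum_l B_{il}x_l + C\right) \;=\; f(C) + \sum_l B_{il}\bigl(f(x_l+C)-f(C)\bigr),
\]
which is immediate: when the row is all zeros the right-hand sum vanishes and we get $f(C)$; when $B_{il^\star}=1$ the sum collapses to the single term $f(x_{l^\star}+C)-f(C)$ and we recover $f(x_{l^\star}+C)$. This step uses nothing about $f$, $\{x_l\}$ or $C$ beyond their being independent of $B$, which is precisely the generality asserted.

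Summing the identity over $i$ then yields
\[
\sum_i f\!\left(\textstyle\sum_l B_{il}x_l + C\right) \;=\; \Bigl(\textstyle\sum_i f(C)\Bigr) \;+\; \sum_{i,l} B_{il}\bigl(f(x_l+C)-f(C)\bigr),
\]
and since the index range for $i$ and the value $C$ are held fixed (do not depend on $B$), the first term on the right is a constant offset. Objectives that differ by a $B$-independent constant have identical optimizers over the feasible set, so optimizing the left-hand side is equivalent to optimizing $\sum_{i,l}B_{il}\bigl(f(x_l+C)-f(C)\bigr)$, establishing the proposition.

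There is no genuinely hard step: the only things to handle carefully are the bookkeeping of the two-case dichotomy (making explicit that it is the ``at most one nonzero entry per row'' constraint that linearizes the expression exactly) and the observation that the leftover $\sum_i f(C)$ is $B$-independent in the subproblems where the trick is invoked. I would flag that caveat explicitly, so that its later use inside \eqref{eq:tilde_j} — where the number of candidate global parameters is momentarily frozen while $B^{j_0}$ is optimized — is unambiguous.
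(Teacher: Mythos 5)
Your proof is correct and takes essentially the same route as the paper's one-line argument, namely a direct case analysis on each row of $B$ under the constraint $\sum_l B_{il}\in\{0,1\}$. You are in fact slightly more careful than the paper, which asserts the two objectives are \emph{equal} for every feasible $B$, whereas they actually differ by the $B$-independent constant $\sum_i f(C)$ --- a discrepancy you correctly identify and correctly dismiss as irrelevant to the optimizer.
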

\begin{proof}
This result simply follows by observing that both objectives are equal for any values of $B$ satisfying the constraint.
\end{proof}

Applying the subtraction trick to \eqref{eq:tilde_j} (conditions on $B$ are satisfied per \eqref{eq:assignment_parametrize}), we arrive at a linear sum assignment formulation $\mathcal{L}_{B^{\setminus j_0},\Theta}(B^{j_0})= - \sum_{i,l} B^{j_0}_{il} C^{j_0}_{il}$, where the cost
\begin{eqnarray}
\label{eq:cost_general}
\hspace{-9.5mm}C^{j_0}_{il} &=& \begin{cases} \log\frac{m_i^{\setminus j_0}}{\alpha+J-1 - m_i^{\setminus j_0}} - \log\frac{\mathcal{H}\left(\tau + T(v_{j_0 l}) + \sum_{j\neq j_0,l} B^j_{il} T(v_{jl}), 1 + m^{\setminus j_0}_i + n_0\right)}{\mathcal{H}\left(\tau + \sum_{j \neq j_0,l} B^j_{il} T(v_{jl}), m^{\setminus j_0}_i + n_0\right)}, & i \leq L_{\setminus j_0} \\
\log\frac{\alpha\gamma_0}{(\alpha+J-1)(i-L_{\setminus j_0})} - 
\log\frac{\mathcal{H}\left(\tau + T(v_{j_0 l}), 1 + n_0\right)}{\mathcal{H}\left(\tau, n_0\right)}, & \hspace{-55pt} L_{\setminus j_0} < i \leq L_{\setminus j_0} + L_j.
\end{cases}    
\end{eqnarray}

Terms on the left are due to $\log \P(B^{j_0}\mid B^{\setminus j_0})$. Details are provided in Supplement \ref{supp:ibp}.
%
%
Our algorithm consists of alternating the Hungarian algorithm with the above cost and hyperparameter optimization using $\log \mathbb{P}(B\mid v)$ from \eqref{eq:marginal}, ignoring $\mathbb{P}(B)$ as it is a constant with respect to hyperparameters.
Specifically, the hyperparameter optimization step is 
\begin{eqnarray}
\label{eq:hyper}
\hspace{-13.5mm}\hat{\tau}, \hat n_0 &=& \argmax_{\tau, n_0}  \sum_{i=1}^L \left(\log \mathcal{H}(\tau, n_0) - \log \mathcal{H}\left(\tau + \sum_{j,l} B^j_{il} T(v_{jl}), \sum_{j,l} B^j_{il} + n_0\right)\right),
\end{eqnarray}
where $B$ is held fixed. After obtaining estimates for $B$ and the hyperparameters $\Theta$, it only remains to compute global parameters estimates $\{\theta_i \}_{i=1}^L = \argmax\limits_{\theta_1,\ldots,\theta_L} \P(\{\theta_i\}_{i=1}^L|B,v,\Theta)$. Given the assignments, expressions for hyperparameter and global parameter estimates can be obtained using gradient-based optimization. In Section \ref{sec:Gauss} we give a concrete example where derivations may be done in closed form. Our method, Statistical Parameter Aggregation via Heterogeneous Matching (SPAHM, pronounced ``spam''), is summarized as Algorithm \ref{alg:main}.


\begin{algorithm}[h]
\begin{algorithmic}[1]
\INPUT{Observed local $v_{jl}$, iterations number $M$, initial hyperparameter guesses $\hat{\tau}, \hat n_0$.}
\WHILE{not converged}
\FOR{$M$ iterations}
\STATE $j \sim \mathrm{Unif}(\{1,\dots, J\})$.
\STATE Form matching cost matrix $C^{ j}$ using eq. \eqref{eq:cost_general}.
\STATE Use Hungarian algorithm to optimize assignments $B^j$, holding all other assignments fixed.
\ENDFOR
\STATE Given $B$, optimize \eqref{eq:hyper} to update hyperparameters $\hat{\tau}, \hat n_0$.
\ENDWHILE 
\OUTPUT{Matching assignments $B$, global atom estimates $\theta_i$.}
\end{algorithmic}
\caption{Statistical Parameter Aggregation via Heterogeneous Matching (SPAHM)}
\label{alg:main}
\end{algorithm}
\subsection{Meta Models with Gaussian Base Measure}
\label{sec:Gauss}
We present an example of how a statistical modeler may apply SPAHM in practice. The only choice modeler has to make is the prior over parameters of their local models, i.e. \eqref{eq:local_exp}. In many practical scenarios (as we will demonstrate in the experiments section) model parameters are real-valued and the Gaussian distribution is a reasonable choice for the prior on the parameters. The Gaussian case is further of interest as it introduces additional parameters. For simplicity we consider the 1-dimensional Gaussian, which is also straightforward to generalize to multi-dimensional isotropic case. 

The modeler starts by writing the density
\begin{equation*}
\begin{split}
    & p_v(v\mid \theta, \sigma^2) = \frac{1}{\sqrt{2\pi\sigma^2}}\exp\left(-\frac{(v - \theta)^2}{2\sigma^2}\right)\text{ and noticing that}\\
    & h_{\sigma}(v)=\frac{1}{\sqrt{2\pi\sigma^2}}\exp\left(-\frac{v^2}{2\sigma^2}\right), \quad T_{\sigma}(v) = \frac{v}{\sigma^2}, \quad \mathcal{A}_{\sigma}(v)=\frac{\theta^2}{2\sigma^2}.
\end{split}
\end{equation*}
Here the subscript $\sigma$ indicates dependence on the additional parameter, i.e. variance. Next,
\[H_{\sigma}(\tau,n_0) = \left(\int \exp\left(\tau \theta - \frac{n_0\theta^2}{2\sigma^2}\right)\diff \theta\right)^{-1} = \left(\exp\left(\frac{\tau^2\sigma^2}{2 n_0}\right)\sqrt{\frac{2\pi\sigma^2}{n_0}}\right)^{-1}\]
to ensure $p_\theta(\theta|\tau,n_0)$ integrates to unity. Hence,
\[\log H_{\sigma}(\tau,n_0) = -\frac{\tau^2\sigma^2}{2 n_0} + \frac{\log n_0  - \log \sigma^2 -\log2\pi}{2}.\]
These are all we need to customize \eqref{eq:cost_general} and \eqref{eq:hyper} to the Gaussian case, which then allows the modeler to use Algorithm \ref{alg:main} to compute the shared parameters across the datasets. Note that in this case $\sum_{j,l} \log h_{\sigma}(v_{jl})$ should be added to eq. \eqref{eq:hyper} if it is desired to learn the additional parameter $\sigma^2$. We recognize that not every exponential family allows for closed form evaluation of the prior normalizing constant $H_{\sigma}(\tau,n_0)$, however it remains possible to use SPAHM by employing Monte Carlo techniques for estimating entries of the cost \eqref{eq:cost_general} and auto-differentiation \cite{baydin18} to optimize hyperparameters.

Continuing our Gaussian example, we note that setting $\tau= \mu_0/\sigma_0^2$ and $n_0 = \sigma^2/\sigma_0^2$ we recover \eqref{eq:global_exp} as a density of a Gaussian random variable with mean $\mu_0$ and variance $\sigma_0^2$, as expected. The further benefit of the Gaussian choice is the closed-form solution to the hyperparameters estimation problem.

Under the mild assumption that $\sigma_0^2 + \sigma^2/m_i \approx \sigma_0^2\, \ \forall i$ (i.e., global parameters are sufficiently distant from each other in comparison to the noise in the local parameters) we obtain
\vspace{0.3mm}
\begin{equation*}
    \begin{split}
       & \hat{\mu}_0 = \frac{1}{L} \sum_{i=1}^L \frac{1}{m_i}\sum_{j,l} B_{il}^j v_{jl}, \quad \hat{\sigma}^2 = \frac{1}{N - L} \sum_{i=1}^L\left(\sum_{j,l} B_{il}^j v_{jl}^2 - \frac{(\sum_{j,l} B_{il}^j v_{jl})^2}{m_i}\right), \\
       & \hat{\sigma}_0^2 = \frac{1}{L}\sum_{i=1}^L \left(\frac{\sum_{j,l} B_{il}^j v_{jl}}{m_i}-\hat{\mu}_0\right)^2 -  \sum_{i=1}^L \frac{\hat{\sigma}^2}{m_i},
    \end{split}
\end{equation*}
\vspace{0.1mm}

where $N=\sum_j L_j$ is the total number of observed local parameters. The result may be verified by setting corresponding derivatives of eq. \eqref{eq:hyper} + $\sum_{j,l} \log h_{\sigma}(v_{jl})$ to 0 and solving the system of equations. Derivations are long but straightforward. Given assignments $B$, our example reduces to a hierarchical Gaussian model -- see Section 5.4 of \citet{gelman2013bayesian} for analogous hyperparameter derivations. Finally we obtain
\[ \theta_i = \frac{\mu_0\sigma^2 + \sigma_0^2\sum_{j,l}B_{il}^j v_{jl}}{\sigma^2 + m_i \sigma_0^2}.\]
For completeness we provide cost expression corresponding to eq. \eqref{eq:cost_general}: $C^{j_0}_{il}=$
\begin{equation}
\label{eq:cost_gaussian}
\begin{cases} 2\log\frac{m_i^{\setminus j_0}}{\alpha+J-1 - m_i^{\setminus j_0}} + \log\frac{m_i^{\setminus j_0} + \frac{\sigma^2}{\sigma_0^2}}{1+m_i^{\setminus j_0} + \frac{\sigma^2}{\sigma_0^2}} + \frac{\left(\frac{\mu_0}{\sigma_0^2} + \frac{v_{j_0 l}}{\sigma^2} + \sum_{j \neq j_0, l} B_{il}^j \frac{v_{jl}}{\sigma^2}\right)^2\sigma^2}{1+m_i^{\setminus j_0} + \frac{\sigma^2}{\sigma_0^2}} - \frac{\left(\frac{\mu_0}{\sigma_0^2} + \sum_{j \neq j_0, l} B_{il}^j \frac{v_{jl}}{\sigma^2}\right)^2\sigma^2}{m_i^{\setminus j_0} + \frac{\sigma^2}{\sigma_0^2}}
, & \\

 2\log\frac{\alpha \gamma_0}{(\alpha +J-1)(i-L_{\setminus j_0})} + \log\frac{\sigma^2}{\sigma_0^2 + \sigma^2} + \frac{(\mu_0/\sigma_0^2 + v_{j_0 l}/\sigma^2)^2\sigma^2}{1 + \sigma^2/\sigma_0^2} - \frac{\mu_0^2}{\sigma_0^2}, & 
\end{cases}    
\end{equation}
where first case is for $i \leq L_{\setminus j_0}$ and second is for $L_{\setminus j_0} < i \leq L_{\setminus j_0} + L_j$.

\subsection{Convergence Analysis}
\label{sec:theory}
\begin{lem}[Algorithmic convergence]\label{lem:conv}
Algorithm \ref{alg:main} creates a sequence of iterates for which $\log \mathbb{P}(B\mid v)$ converges as the number of iterations $n \rightarrow \infty$. See Supplement \ref{supp:lemma1} for a proof sketch.
\end{lem}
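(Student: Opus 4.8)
The plan is to view Algorithm~\ref{alg:main} as a block coordinate ascent scheme for the scalar objective $\log\mathbb{P}(B\mid v)=\log\mathcal{L}(B,\Theta)$ from \eqref{eq:marginal}, and then invoke the monotone convergence theorem for real sequences. The two blocks being updated are: (i) the assignment submatrix $B^{j_0}$ of a single, randomly chosen group $j_0$ (line~5), with $B^{\setminus j_0}$ and the hyperparameters $\Theta=(\tau,n_0)$ held fixed; and (ii) the hyperparameters $\Theta$ (and $\sigma^2$, if it is learned, line~7), with $B$ held fixed. I would first show that each update exactly maximizes the objective over its own block, so that the sequence of objective values produced over a whole run is non-decreasing; combined with a uniform upper bound, this forces convergence. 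Note the statement is about convergence of the objective \emph{values} (not of the discrete iterates $B$, which cannot be expected to converge in the usual sense), and that the monotonicity argument is pathwise in the random draws of $j$, so the conclusion holds for every realization of the algorithm's randomness.

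For block~(i): holding $B^{\setminus j_0}$ and $\Theta$ fixed, \eqref{eq:marginal}--\eqref{eq:tilde_j} give $\log\mathbb{P}(B\mid v)=\mathcal{L}_{B^{\setminus j_0},\Theta}(B^{j_0})+c$, where $c$ collects $\log\mathbb{P}(B^{\setminus j_0})$, $\sum_{j,l}\log h(v_{jl})$ and $-\log\mathbb{P}(v)$ and is therefore independent of $B^{j_0}$. By the subtraction trick (Proposition~1), $\mathcal{L}_{B^{\setminus j_0},\Theta}(B^{j_0})$ equals $-\sum_{i,l}B^{j_0}_{il}C^{j_0}_{il}$ with the costs \eqref{eq:cost_general}, again up to a constant in $B^{j_0}$. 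Since group $j_0$ carries only $L_{j_0}$ local parameters, it can populate at most $L_{j_0}$ fresh global atoms, so the cost matrix has finitely many (at most $L_{\setminus j_0}+L_{j_0}$) rows and the Hungarian algorithm on line~5 returns an \emph{exact} minimizer over all admissible assignment matrices; hence line~5 exactly maximizes $\log\mathbb{P}(B\mid v)$ over $B^{j_0}$, and in particular does not decrease it. Deleting any global atom left with $m_i=0$ after the update is harmless, since an empty atom contributes $\log[\mathcal{H}(\tau,n_0)/\mathcal{H}(\tau,n_0)]=0$ to \eqref{eq:marginal} and never-used features do not enter the IBP factor $\mathbb{P}(B)$.

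For block~(ii): decompose $\log\mathbb{P}(B\mid v)=\log\mathbb{P}(B)+\sum_{i}\big(\log\mathcal{H}(\tau,n_0)-\log\mathcal{H}(\tau+\sum_{j,l}B^j_{il}T(v_{jl}),\sum_{j,l}B^j_{il}+n_0)\big)-\log\mathbb{P}(v)$. With $B$ fixed, $\log\mathbb{P}(B)$ does not depend on $\Theta$, and the middle term is exactly the objective maximized in \eqref{eq:hyper}; so line~7 maximizes the $\Theta$-dependent part of $\log\mathbb{P}(B\mid v)$ exactly and again cannot decrease it. (If $\sigma^2$ is additionally estimated, appending $\sum_{j,l}\log h_\sigma(v_{jl})$ to \eqref{eq:hyper}, as noted after the Gaussian derivation, keeps this statement true.)

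Putting the two together, the objective is non-decreasing along the entire run. For boundedness, for every value of $\Theta$ we have $\mathbb{P}(B\mid v)=\mathbb{P}(B,v)/\mathbb{P}(v)$ with $\mathbb{P}(v)=\sum_{B'}\mathbb{P}(B',v)\ge\mathbb{P}(B,v)$, so $\mathbb{P}(B\mid v)\le 1$ and $\log\mathbb{P}(B\mid v)\le 0$ at every iterate, uniformly over the (changing) hyperparameter estimates. A non-decreasing real sequence bounded above converges, which is the claim. I expect the step requiring the most care to be the assertion in block~(i) that the finite linear-assignment problem solved by the Hungarian algorithm genuinely ranges over \emph{every} admissible reassignment of group $j_0$ — including creation of new global atoms and emptying of existing ones — which is exactly what the derivation of the cost \eqref{eq:cost_general} establishes. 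A secondary subtlety is that the number of global atoms, hence the ambient dimension of the block $B^{j_0}$, varies across iterations; this is immaterial, since each update is still an exact maximization of the single well-defined real number $\log\mathbb{P}(B\mid v)$, and neither convergence nor boundedness of the atom count is needed.
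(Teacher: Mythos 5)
Your proposal is correct and follows essentially the same route as the paper's proof: each Hungarian step and each hyperparameter step is an exact block maximization of $\log\mathbb{P}(B\mid v)$, so the objective values are non-decreasing and bounded above by $0$ (since $B$ is discrete), hence convergent. The additional details you supply --- the constant terms dropped in each block, the handling of emptied atoms, and the pathwise validity under the random choice of $j$ --- are sound elaborations of points the paper's shorter argument leaves implicit.
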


\paragraph{Hyperparameter Consistency.}
While the exponential family hyperparameter objective function \eqref{eq:hyper} is too general to be tractable, the consistency of the hyperparameter estimates can be analyzed for specific choices of distributional families. Following the specialization to Gaussian distributions in Section \ref{sec:Gauss}, the following result establishes that the closed-form hyperparameter estimates are consistent in the case of Gaussian priors, subject to the assignments $B$ being correct.
\begin{thm}\label{thm:hyperCon}
Assume that the binary assignment variables $B$ are known or estimated correctly. The estimator for $\hat{\mu}_0$ for the hyperparameter $\mu_0$ in the Gaussian case is then consistent as the number of global atoms $L \rightarrow \infty$. Furthermore, the estimators $\hat{\sigma}_0^2$ and $\hat{\sigma}^2$ for the hyperparameters $\sigma_0^2$ and $\sigma^2$ are consistent as the total number of global atoms with multiple assignments $\sum_{i=1}^L I((\sum_{j,l} B_{il}^j) > 1) \rightarrow \infty$, where $I(\cdot)$ is the indicator function. See Supplement \ref{supp:hyperCon} for a detailed proof.
\end{thm}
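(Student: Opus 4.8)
Condition throughout on the (correct) assignments $B$, so that the multiplicities $m_i=\sum_{j,l}B_{il}^j$ are fixed constants and the only randomness is that of the atoms $\theta_i$, drawn i.i.d.\ from $\mathcal N(\mu_0,\sigma_0^2)$, and of the local parameters $v_{jl}\mid\theta\sim\mathcal N(\theta_{c(j,l)},\sigma^2)$. Given $B$ the model is an unbalanced one-way Gaussian random-effects (hierarchical) model, and the facts we need are classical: writing $\bar v_i:=\tfrac1{m_i}\sum_{j,l}B_{il}^j v_{jl}$ for the $i$th group mean and $S_i:=\sum_{j,l}B_{il}^j v_{jl}^2-m_i\bar v_i^2$ for the $i$th within-group sum of squares, one has (i) $\bar v_i\sim\mathcal N(\mu_0,\sigma_0^2+\sigma^2/m_i)$, independent across $i$; (ii) $S_i\sim\sigma^2\chi^2_{m_i-1}$, independent across $i$ and jointly independent of $\{\bar v_i\}$ (in particular $S_i\equiv 0$ when $m_i=1$); and (iii) since $m_i\ge 1$, every variance $\sigma_0^2+\sigma^2/m_i$ lies in $(\sigma_0^2,\sigma_0^2+\sigma^2]$. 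I will also use that $\sum_i I(m_i>1)\to\infty$ forces both $L\to\infty$, since $L\ge\sum_i I(m_i>1)$, and $N-L=\sum_i(m_i-1)\ge\sum_i I(m_i>1)\to\infty$; so for $\hat\sigma^2$ and $\hat\sigma_0^2$ both growth regimes hold simultaneously.

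\textbf{Consistency of $\hat\mu_0$ and $\hat\sigma^2$.} Both are immediate from Chebyshev's inequality. As $\hat\mu_0=\tfrac1L\sum_{i=1}^L\bar v_i$ averages independent variables with common mean $\mu_0$ and variance at most $\sigma_0^2+\sigma^2$, we get $\mathbb E[\hat\mu_0]=\mu_0$ and $\mathrm{Var}(\hat\mu_0)\le(\sigma_0^2+\sigma^2)/L\to 0$, hence $\hat\mu_0\to\mu_0$ in probability as $L\to\infty$. The numerator of $\hat\sigma^2$ is $\sum_{i=1}^L S_i$, a sum of independent $\sigma^2\chi^2_{m_i-1}$ variables with total mean $\sigma^2(N-L)$ and total variance $2\sigma^4(N-L)$, so $\mathbb E[\hat\sigma^2]=\sigma^2$ and $\mathrm{Var}(\hat\sigma^2)=2\sigma^4/(N-L)\to 0$; hence $\hat\sigma^2\to\sigma^2$ in probability once $N-L\to\infty$.

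\textbf{Consistency of $\hat\sigma_0^2$.} Read the debiasing term as the per-atom average $\tfrac1L\sum_i\hat\sigma^2/m_i$ (the quantity that makes $\hat\sigma_0^2$ asymptotically unbiased), so $\hat\sigma_0^2=\tfrac1L\sum_i(\bar v_i-\hat\mu_0)^2-\tfrac1L\sum_i\hat\sigma^2/m_i$. Expanding $\bar v_i-\hat\mu_0=(\bar v_i-\mu_0)-(\hat\mu_0-\mu_0)$ and summing gives $\tfrac1L\sum_i(\bar v_i-\hat\mu_0)^2=\tfrac1L\sum_i(\bar v_i-\mu_0)^2-(\hat\mu_0-\mu_0)^2$. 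The first average is a mean of independent terms with expectation $\sigma_0^2+\tfrac1L\sum_i\sigma^2/m_i$ and variance at most $2(\sigma_0^2+\sigma^2)^2/L\to 0$, so it equals $\sigma_0^2+\tfrac1L\sum_i\sigma^2/m_i+o_p(1)$, where $o_p(1)$ denotes a term converging to $0$ in probability. For the correction, $\tfrac1L\sum_i\hat\sigma^2/m_i=\hat\sigma^2\cdot\tfrac1L\sum_i m_i^{-1}$, and since $\tfrac1L\sum_i m_i^{-1}\in(0,1]$ is bounded while $\hat\sigma^2=\sigma^2+o_p(1)$, this equals $\tfrac1L\sum_i\sigma^2/m_i+o_p(1)$. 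Since $(\hat\mu_0-\mu_0)^2=o_p(1)$ by the previous step, subtracting makes the deterministic quantity $\tfrac1L\sum_i\sigma^2/m_i$ cancel exactly, leaving $\hat\sigma_0^2=\sigma_0^2+o_p(1)$, i.e.\ $\hat\sigma_0^2\to\sigma_0^2$ in probability.

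\textbf{Main obstacle.} No single step is deep — the statement is essentially consistency of the ANOVA/method-of-moments estimators in an unbalanced one-way Gaussian random-effects model — but there are two points to watch. First, given $B$ the multiplicities $m_i$ are otherwise arbitrary: they need not be bounded, and $\tfrac1L\sum_i m_i^{-1}$ need not converge, so the $\hat\sigma_0^2$ computation must be arranged (as above) so that this deterministic term cancels rather than needing to be controlled. Second, one must verify that the hypothesis $\sum_i I(m_i>1)\to\infty$ genuinely delivers both $L\to\infty$ and $N-L\to\infty$ — which it does, via the two elementary inequalities noted in the setup.
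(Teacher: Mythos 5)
Your proof is correct and follows the same basic route as the paper's (Supplement C): compute the expectations of the method-of-moments estimators under the hierarchical Gaussian model and then argue concentration. The differences are in execution, and your version is tighter in three places where the paper's argument is only a sketch. First, where the paper invokes the Bernstein inequality for subexponential variables, you get the same conclusions from exact distributional facts ($\bar v_i\sim\mathcal N(\mu_0,\sigma_0^2+\sigma^2/m_i)$ and $S_i\sim\sigma^2\chi^2_{m_i-1}$) plus Chebyshev, which yields explicit variance rates $2\sigma^4/(N-L)$ and $O(1/L)$. Second, for $\hat\sigma_0^2$ the paper concludes with ``by smoothness, if $\hat\mu_0$ and $\hat\sigma^2$ are consistent, $\hat\sigma_0^2$ will be as well''; this is the weakest step of their argument because the deterministic sequence $\tfrac1L\sum_i m_i^{-1}$ need not converge, and your explicit decomposition — showing that $\tfrac1L\sum_i\sigma^2/m_i$ appears in both the sample variance of the group means and the correction term and cancels exactly — is the honest way to close that gap. (You also correctly use the supplement's form of $\hat\sigma_0^2$, with the $1/L$ applied to the correction term, rather than the main text's typo.) Third, you verify that the stated asymptotic regime $\sum_i I(m_i>1)\to\infty$ really does imply both $L\to\infty$ and $N-L\to\infty$, which the paper uses implicitly. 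No gaps; if anything your write-up could replace the paper's.
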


\section{Experiments}
\label{sec:experiment}

\begin{wrapfigure}[22]{r}{0.3\linewidth}
\vspace{-2.5mm}
  \centering 
  \begin{subfigure}
  \centering
  \includegraphics[scale=0.145]{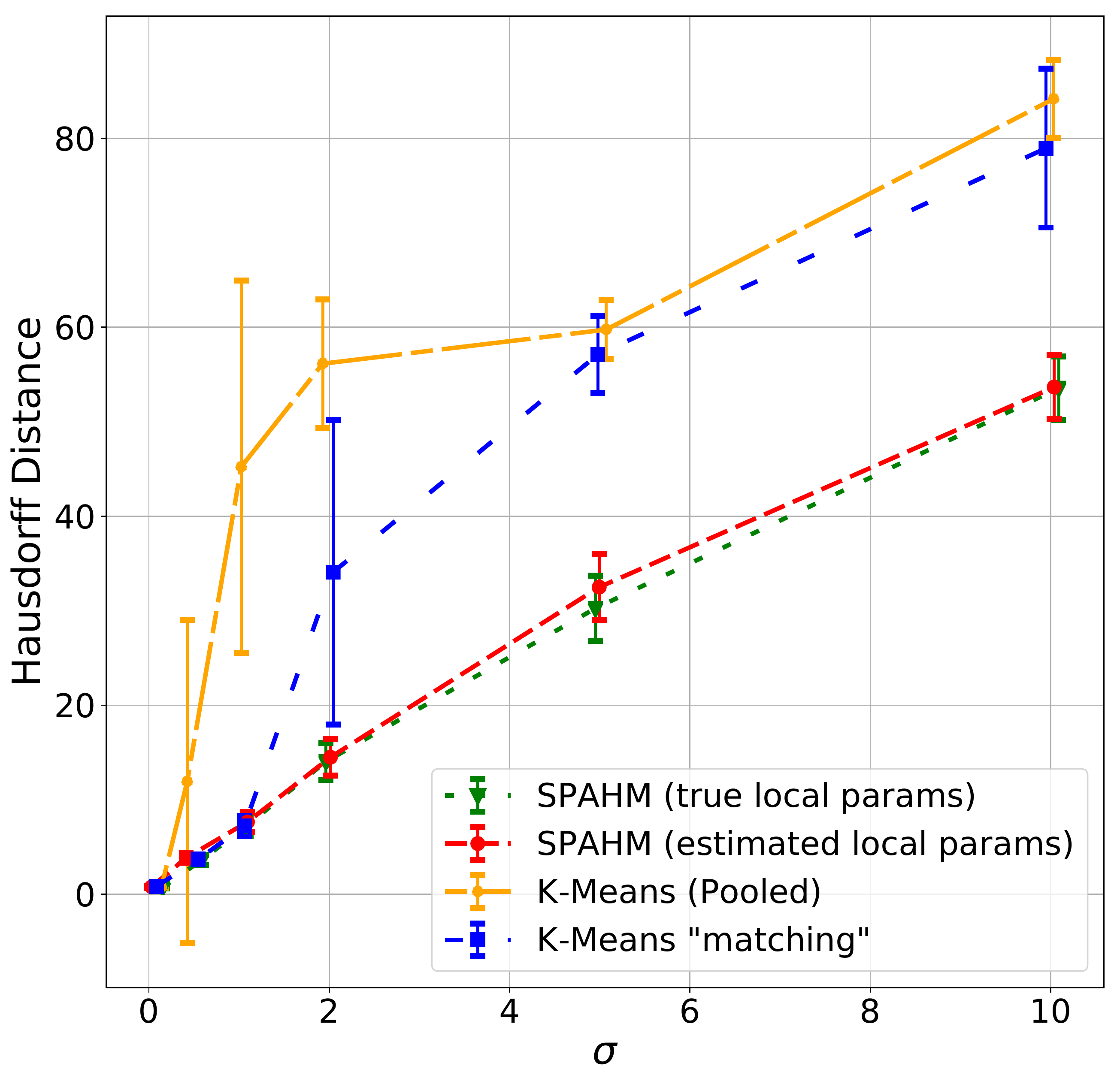}
  \vskip -0.19in
  \caption{Increasing noise $\sigma$}
  \label{fig:sim_sigma}
  \end{subfigure}
  \begin{subfigure}
  \centering
  \includegraphics[scale=0.145]{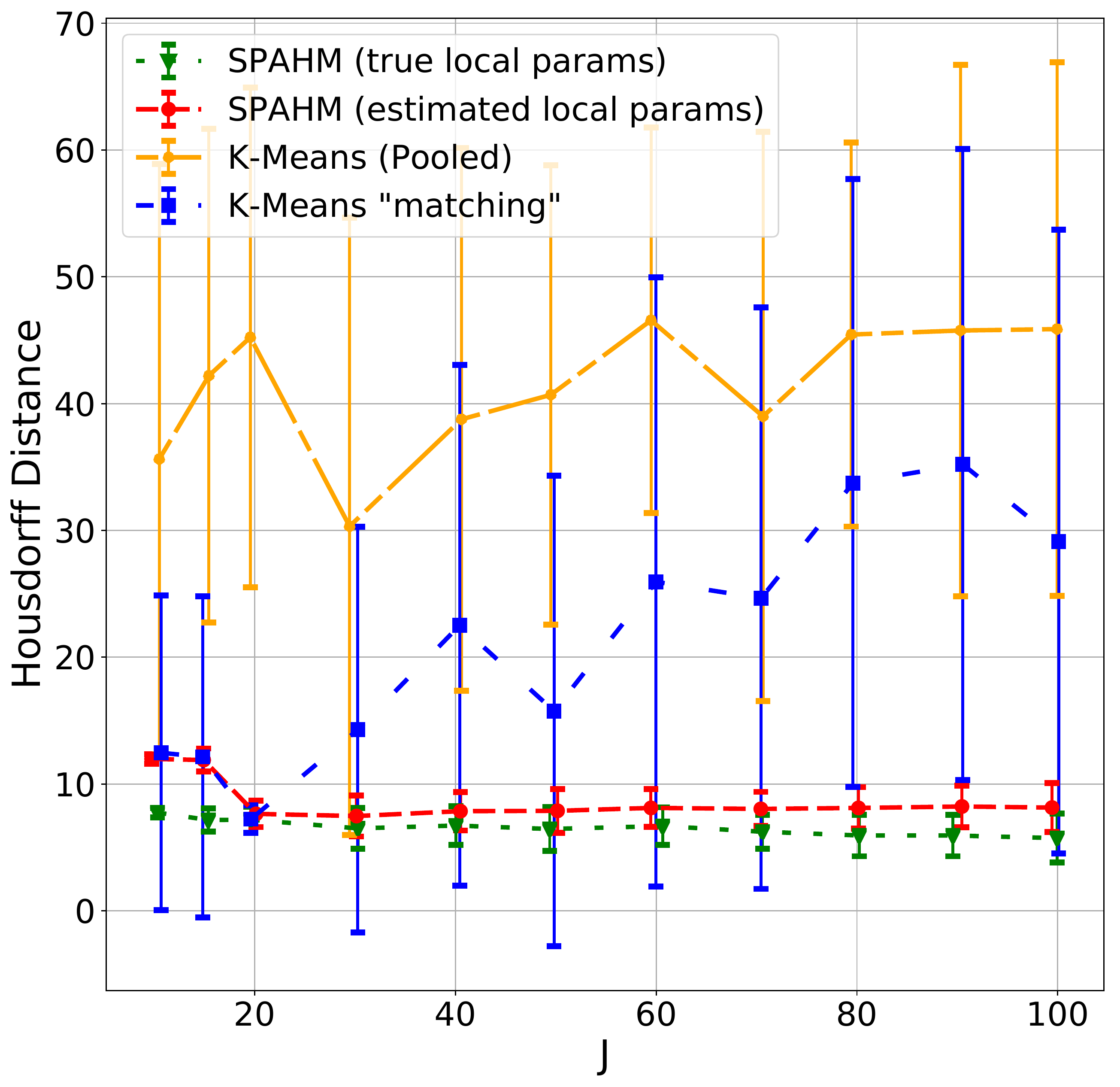}
  \vskip -0.19in
  \caption{Increasing $J$\vspace{2mm}}
  \label{fig:sim_J}
  \end{subfigure}
\end{wrapfigure}

\paragraph{Simulated Data.}

We begin with a correctness verification of our inference procedure via a simulated experiment. We randomly sample $L=50$ global centroids $\theta_i \in \mathbb{R}^{50}$ from a Gaussian distribution $\theta_i \sim \mathcal{N}(\mu_0, \sigma_0^2\mathbf{I})$. We then simulate $j=1,\ldots,J$ heterogeneous datasets by picking a random subset of global centroids and adding white noise with variance $\sigma^2$ to obtain the ``true'' local centroids, $\{v_{jl}\}_{l=1}^{L_j}$ (following generative process in Section \ref{sec:exp_model} with Gaussian densities). Then each dataset is sampled from a Gaussian mixture model with the corresponding set of centroids. We want to estimate global centroids and parameters $\mu_0,\sigma_0^2,\sigma^2$. We consider two basic baselines: k-means clustering of all datasets pooled into one (k-means pooled) and k-means clustering of local centroid estimates (this can be seen as another form of parameter aggregation - i.e. k-means ``matching''). Both, unlike SPAHM, enjoy access to true $L$.

To obtain local centroid estimates for SPAHM and k-means ``matching'', we run (another) k-means on each of the simulated datasets. Additionally to quantify how local estimation error may effect our approach, we compare to SPAHM using true data generating local centroids. To measure the quality of different approaches we evaluate Hausdorff distance between the estimates and true data generating \emph{global} centroids. This experiment is presented in Figures \ref{fig:sim_sigma} and \ref{fig:sim_J}. White noise variance $\sigma$ implies degree of heterogeneity across $J=20$ datasets and as it grows the estimation problem becomes harder, however SPAHM degrades more gracefully than baselines. Fixing $\sigma^2=1$ and varying number of datasets $J$ may make the problem harder as there is more overlap among the datasets, however SPAHM is able to maintain low estimation error.
We empirically verify hyperparameter estimation quality in Supplement \ref{supp:hypers_quality}.

\paragraph{Gaussian Topic Models.}
We present a practical scenario where problem similar to our simulated experiment arises --- learning Gaussian topic models \citep{das2015gaussian} where local topic models are learned from the Gutenberg dataset comprising 40 books. We then build the global topic model using SPAHM. We use basic k-means with $k=25$ to cluster word embeddings of words present in a book to obtain local topics and then apply SPAHM resulting in 155 topics. We compare to the Gibbs sampler of \citet{das2015gaussian} in terms of the UCI coherence score \citep{newman2010automatic}, $-2.1$ for SPAHM and $-4.6$ for \citep{das2015gaussian}, where higher is better. Besides, \citep{das2015gaussian} took 16 hours to run 100 MCMC iterations while SPAHM + k-means takes only 40 seconds, \textbf{\emph{over $1400$ times faster}}. We present topic interpretation in Fig. \ref{fig:topic_model_war} and defer additional details to Supplement \ref{supp:topics}.

\begin{figure}
    \centering
    \includegraphics[width=\textwidth]{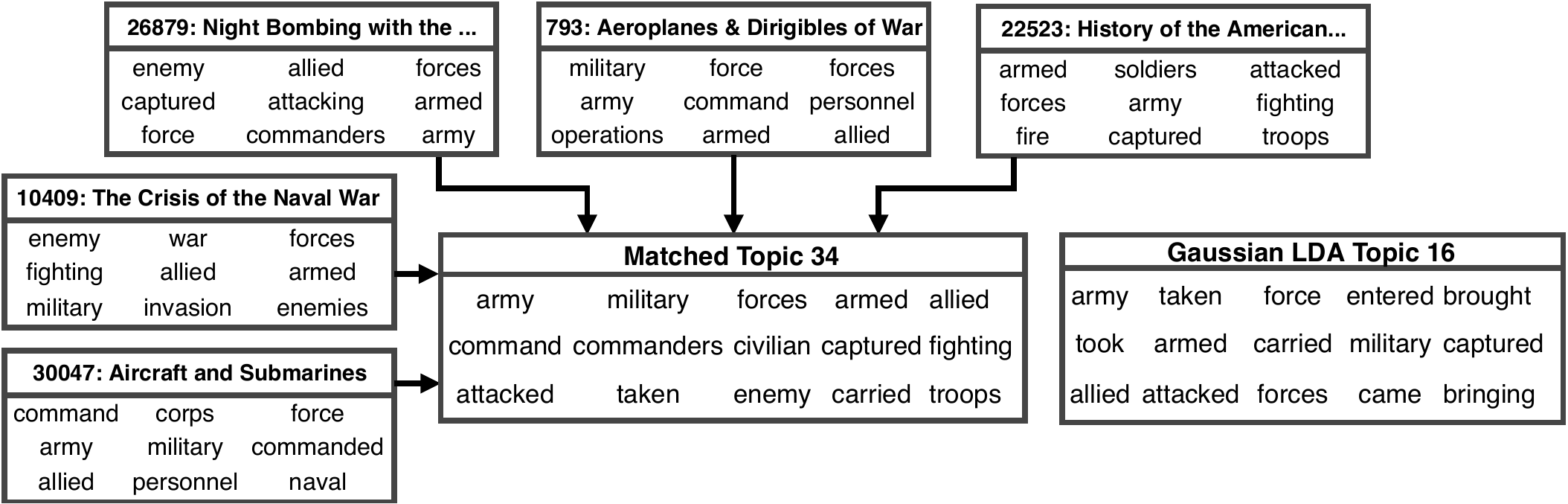}
    \caption{Topic related to war found by SPAHM and Gaussian LDA. The five boxes pointing to the Matched topic represent local topics that SPAHM fused into the global one. The headers of these five boxes state the book names along with their Gutenberg IDs.
    }
    \label{fig:topic_model_war}
\end{figure}

%
%

%
%

\paragraph{Gaussian Processes.}
We next demonstrate the effectiveness of our approach on the task of temperature prediction using Sparse Gaussian Processes (SGPs) \citep{bauer2016understanding}. For this task, we utilize the GSOD data available from the National Oceanic and Atmospheric Administration\footnote{\url{https://data.noaa.gov/dataset/dataset/global-surface-summary-of-the-day-gsod}} containing the daily global surface weather summary from over 9000 stations across the world. We limit the geography of our data to the United States alone and also filter the observations to the year 2015 and after. We further select the following 7 features to create the final dataset - date (day, month, year), latitude, longitude, and elevation of the weather stations, and the previous day's temperature. We consider states as datasets of wheather stations observations.

We proceed by training SGPs on each of the 50 states data and evaluate it on the test set consisting of a random subset drawn from all states. Such locally trained SGPs do not generalize well beyond their own region, however we can apply SPAHM to match local inducing points along with their response values and pass it back to each of the states. Using inducing points found by SPAHM, local GPs gain ability to generalize across the continent while maintaining comparable fit on its own test data (i.e. test data sampled only from a corresponding state). We summarize the results across 10 experiment repetitions in Table \ref{tbl:tempGP}. In addition, we note that \citet{bauer2016understanding} previously showed that increasing number of inducing inputs tends to improve performance. To ensure this is not the reason for strong performance of SPAHM we also compare to local SGPs trained with 300 inducing points each.
%


%
%
\begin{table}[t]
\caption{Temperature prediction using sparse Gaussian Processes}
\label{tbl:tempGP}
\vspace{-7mm}
\begin{center}
\resizebox{\columnwidth}{!}{%
\begin{small}
\begin{sc}
\begin{tabular}{ccc}
\toprule
Experiment setup & RMSE (self) & RMSE (across USA) \\
\midrule
Group SGPs with 50 local parameters    & 5.509 $\pm$ 0.0135 & 14.565 $\pm$ 0.0528 \\
SPAHM with $289\pm9.8$ global parameters  & 5.860 $\pm$ 0.0390 & 8.917 $\pm$ 0.1988 \\
Group SGPs with 300 local parameters &  5.267 $\pm$ 0.0084  & 15.848 $\pm$ 0.0303 \\

\bottomrule
\end{tabular}
\end{sc}
\end{small}
}
\end{center}
\end{table}

\paragraph{Hidden Markov Models.}
\begin{figure*}[t]
    \centering
    \includegraphics[width=0.20\textwidth]{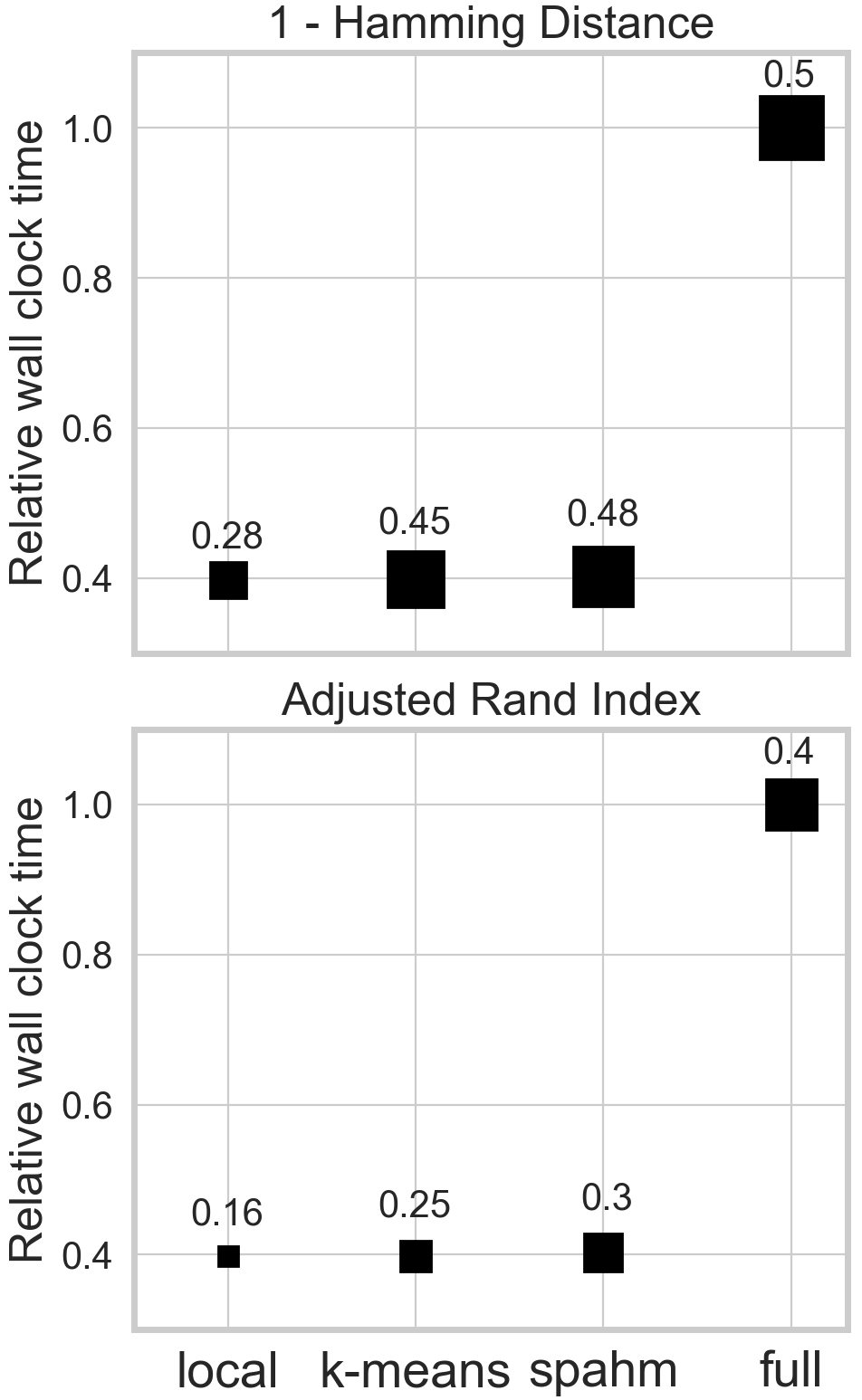}
    \includegraphics[width=0.785\textwidth]{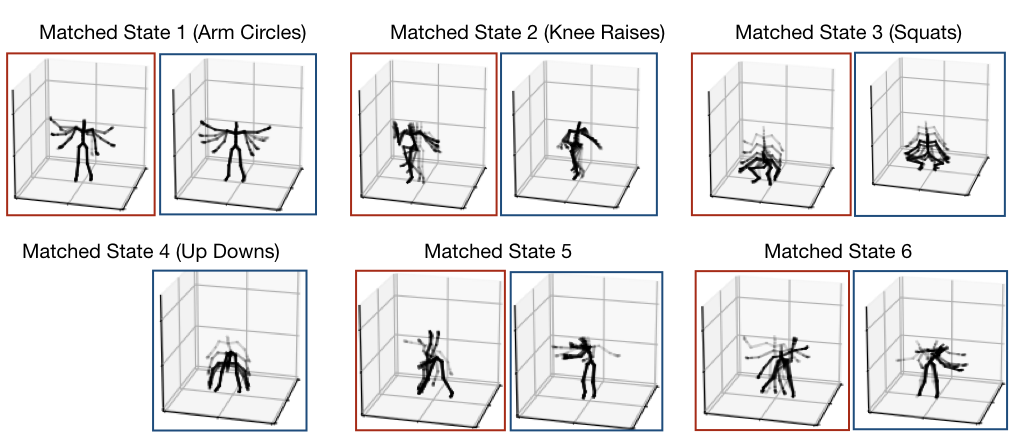}
    \caption{\textbf{BBP discovers coherent global structure from MoCap Sequences}. We analyze three MoCAP sequences each from two subjects performing a series of exercises. Some exercises are shared between subjects while others are not. Two HDP-HMMs were fit to explain the sequences belonging to each subject independently. \emph{Left}: We show the fraction of Full HDP-HMM wall clock time taken by various competing algorithms. The area of each square is proportional to 1 - normalized hamming distance and adjusted rand index in the top and bottom plots. The actual values are listed above each square. Larger squares indicate closer matches to ground truth. At less than half the compute SPAHM produces similar segmentations to the full HDP-HMM while improving significantly on the local models and k-means based matching. \emph{Right}: Typical motions associated with the matched states from the two models are visualized in the red and blue boxes. Skeletons are visualized from contiguous segments of at least 0.5 seconds of data as segmented by the MAP trajectories.}
    \label{fig:hmm}
\end{figure*}
Next, we consider the problem of discovering common structure in collections of related MoCAP sequences collected from the CMU MoCap database (\url{http://mocap.cs.cmu.edu}). We used a curated subset~\cite{fox2014joint} of the data from two different subjects each providing three sequences. This subset comes with human annotated labels which allow quantitative comparisons. We performed our experiments on this annotated subset. For each subject, we trained an independent `sticky' HDP-HMM~\cite{fox2008hdp} with Normal Wishart likelihoods. We used memoized variational inference~\cite{hughes2015scalable} with random restarts and merge moves to alleviate local optima issues (see Supplement \ref{supp:hdp_hmm} for details about parameter settings and data pre-processing). The trained models discovered nine states for the first subject and thirteen for the second. We then used SPAHM to match local HDP-HMM states across subjects and recovered fourteen global states. The matching was done on the basis of the posterior means of the local states. 

The matched states are visualized in Figure~\ref{fig:hmm} (right) and additional visualizations are available in Supplement \ref{supp:hmm_figure}. We find that SPAHM correctly recognizes similar activities across subjects. It also creates singleton states when there are no good matches. For instance, ``up-downs'', an activity characterized by distinct motion patterns is only performed by the second subject. We correctly do not match it with any of the activities performed by the first subject. The figure also illustrates a limitation of our procedure wherein poor local states can lead to erroneous matches. Global states five and six are a combination of ``toe-touches'' and ``twists''. State five combines exaggerated motions to the right while state six is a combination of states with motions to the left. Although the toe-touch activities exhibit similar motions, the local HDP-HMM splits them into different local states. Our matching procedure only allows local states to be matched across subjects and not within. As a result, they get matched to oversegmented ``twists'' with similar motions.      

We also quantified the quality of the matched solutions using normalized hamming distance~\cite{fox2014joint} and adjusted rand index~\cite{rand1971objective}. We compare against local HDP-HMMs as well as two strong competitors, an identical sticky HDP-HMM model but trained on all six sequences jointly, and an alternate matching scheme based on $k$-means clustering that clusters together states discovered by the individual HDP-HMMs. For $k$-means, we set $k$ to the ground truth number of activities, twelve. The results are shown in Figure~\ref{fig:hmm} (left).
Quantitatively results show that SPAHM does nearly as well as the full HDP-HMM at less than half the amount of compute time. Note that SPAHM may be applied to larger amount of sequences, while full HDP-HMM is limited to small data sizes. We also outperform the k-means scheme despite cheating in its favor by providing it with the true number of labels.
%
%
%
%
%
\section{Conclusion}
\label{sec:conc}
This work presents a statistical model aggregation framework for combining heterogeneous local models of varying complexity trained on federated, private data sources. Our proposed framework is largely model-agnostic requiring only an appropriately chosen base measure, and our construction assumes only that the base measure belongs to the exponential family. As a result, our work can be applied to a wide range of practical domains with minimum adaptation. A possible interesting direction for future work will be to consider situations where local parameters are learned across datasets with a time stamp in addition to the grouping structure.

\clearpage
\bibliography{MY_ref}
\bibliographystyle{natbib}

\clearpage
\appendix

\section{IBP prior term derivation}
\label{supp:ibp}
We provide derivations for $\log \P(B^{j_0}\mid B^{\setminus j_0})$ for the cost expression in Eq. \eqref{eq:cost_general} of the main text.

First note that due to exchangeability of the IBP, customer $\setminus j_0$ may be considered as the last one, hence:
\begin{equation}
    \begin{split}
        & \log \P(B^{j_0}\mid B^{\setminus j_0}) = \\ & \sum_{i=1}^{L_{\setminus j_0}} \left(\left(\sum_{l=1}^{L_{j_0}} B_{il}^{j_0} \right) \log \frac{m_i^{\setminus j_0}}{\alpha+J-1} + \left(1 - \sum_{l=1}^{L_{j_0}} B_{il}^{j_0}\right)\log \frac{\alpha+J-1-m_i^{\setminus j_0}}{\alpha+J-1}\right) + \\
        & \sum_{i=L_{\setminus j_0} + 1}^{L_{\setminus j_0} + L_{j_0}} \sum_{l=1}^{L_{j_0}} B_{il}^{j_0} \left(\log\frac{\alpha\gamma_0}{\alpha +J-1} - \log(i - L_{\setminus j_0}) \right).
    \end{split}
\end{equation}
It is now easy to see that when $i \leq L_{\setminus j_0}$, the contribution of the IBP prior is $\log\frac{m_i^{\setminus j_0}}{\alpha+J-1 - m_i^{\setminus j_0}}$, and when $L_{\setminus j_0} < i \leq L_{\setminus j_0} + L_{j_0}$, it is $\log\frac{\alpha\gamma_0}{(\alpha+J-1)(i-L_{\setminus j_0})}$.

\section{Proof of Lemma \ref{lem:conv}}
\label{supp:lemma1}

\begin{lemn}[\ref{lem:conv}]
Algorithm \ref{alg:main} (of the main text) creates a sequence of iterates for which $\log \mathbb{P}(B\mid v)$ converges as the number of iterations $n \rightarrow \infty$.
\end{lemn}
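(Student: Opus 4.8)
The plan is to read Algorithm~\ref{alg:main} as a block coordinate ascent on the single scalar objective $\Phi(B,\Theta) := \log\mathbb{P}(B\mid v)$ and then apply the monotone convergence theorem for real sequences. First I would do the bookkeeping needed so that both kinds of updates act on \emph{the same} function $\Phi$. For fixed $\Theta$, the objective $\mathcal{L}_\Theta(B)$ in \eqref{eq:B_objective} equals $\Phi(B,\Theta)$ up to the additive term $\sum_{j,l}\log h(v_{jl})$ and the $B$-, $\Theta$-independent normalizing constant appearing in \eqref{eq:marginal}; crucially $\sum_i\sum_{z\in Z_i}\log h(v_z)=\sum_{j,l}\log h(v_{jl})$ no matter which partition $\{Z_i\}$ the assignment $B$ induces, so that term is genuinely constant in $B$. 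Likewise, the hyperparameter objective \eqref{eq:hyper} is exactly the $\Theta$-dependent part of $\Phi$, since the IBP prior factor $\mathbb{P}(B)$ does not depend on $\Theta$ and is therefore legitimately dropped in that step.

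Next I would check that every step of the algorithm weakly increases $\Phi$. For a matching update on group $j_0$: holding $\Theta$ and $B^{\setminus j_0}$ fixed, the subtraction trick (Proposition~1) rewrites $\mathcal{L}_{B^{\setminus j_0},\Theta}(B^{j_0})$ as $-\sum_{i,l}B^{j_0}_{il}C^{j_0}_{il}$ up to a constant in $B^{j_0}$, with costs \eqref{eq:cost_general}; the Hungarian algorithm returns a feasible $B^{j_0}$ minimizing this linear cost, and the incumbent $B^{j_0}$ is itself feasible, so $\mathcal{L}_{B^{\setminus j_0},\Theta}$ --- hence $\Phi$ --- cannot decrease. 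For a hyperparameter update: maximizing \eqref{eq:hyper} over $(\tau,n_0)$ starting from the incumbent value can only increase that term, hence $\Phi$ (if \eqref{eq:hyper} is optimized only approximately, one assumes as usual that the returned value does not decrease the objective relative to a warm start).

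Combining these observations, the sequence $\big(\Phi(B^{(n)},\Theta^{(n)})\big)_{n\ge 1}$ generated by Algorithm~\ref{alg:main} is monotonically nondecreasing, for every realization of the random group indices drawn inside the inner loop. It is also bounded above: for any fixed $\Theta$, the map $B\mapsto\mathbb{P}(B\mid v)$ is a probability mass function on the (countable) set of valid assignments, so $\mathbb{P}(B\mid v)\le 1$ and $\Phi\le 0$, uniformly in $\Theta$. A nondecreasing real sequence that is bounded above converges, which is exactly the claim of Lemma~\ref{lem:conv}.

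I expect the only delicate point to be confirming that the two kinds of updates really do ascend one common objective: this relies on (i) the exactness of the subtraction-trick reformulation and of the cost derivation in Supplement~\ref{supp:ibp}, so that the Hungarian step is the true $\argmax$ of $\mathcal{L}_\Theta$ over the block rather than of a surrogate, and (ii) the constancy of the terms dropped when moving between \eqref{eq:marginal}, \eqref{eq:B_objective} and \eqref{eq:hyper}. Once these are in place, the upper bound and the limiting argument are routine. Note that this argument yields convergence of the objective \emph{values} only; it does not assert that the discrete assignments $B^{(n)}$ themselves converge, which is not needed for the statement.
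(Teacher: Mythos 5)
Your proposal is correct and follows essentially the same route as the paper's own proof: both read Algorithm~\ref{alg:main} as block coordinate ascent on $\log\mathbb{P}(B\mid v)$, observe that the Hungarian step and the hyperparameter step each weakly increase this common objective, bound it above by $0$ using discreteness of $\mathbb{P}(B\mid v)$, and conclude by monotone convergence. The only difference is that you spell out the bookkeeping (constancy of $\sum_{j,l}\log h(v_{jl})$ and of $\mathbb{P}(B)$ across the two update types) that the paper's proof sketch takes for granted, which is a welcome addition but not a different argument.
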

\begin{proof}

Since the Hungarian algorithm finds a globally optimal solution to the assignment problem for $B^j$, each Hungarian step must yield a larger or equal objective function value ($\log \mathbb{P}(B\mid v)$) compared to the previous $B^j$ (where $B^{\setminus j}$ and the hyperparameters are held fixed).
Similarly, the hyperparameter optimization step is assumed to find a global optimum (for Gaussian priors it is closed form) of $\log \mathbb{P}(B\mid v)$ with $B$ fixed and must therefore not decrease the objective $\log \mathbb{P}(B\mid v)$.
Therefore, each step in Algorithm 1 cannot decrease $\log \mathbb{P}(B\mid v)$. Since the $\log \mathbb{P}(B\mid v)$ is bounded from above by 0 (since $\mathbb{P}(B\mid v)$ is discrete-valued), this implies that Algorithm 1 creates a sequence of iterates for which the objective function value converges.
\end{proof}

\section{Proof of Theorem \ref{thm:hyperCon}: Gaussian hyperparameter consistency}
\label{supp:hyperCon}

First consider $\hat{\mu}_0$. Recall that
\[
\hat{\mu}_0 = \frac{1}{L} \sum_{i=1}^L \frac{1}{m_i}\sum_{j,l} B_{il}^j v_{jl}.
\]
Hence, $\mathbb{E} \hat{\mu}_0 = \mu_0$ since the marginal expectation $\mathbb{E} v_{jl} = \mu_0$ and the $B_{il}^j$ are assumed known. Since the $v_{jl}$ are Gaussian with bounded variance, and the underlying $L$ global atoms are independent, $\hat{\mu}_0$ will concentrate around its mean when $L \rightarrow \infty$. Hence $\hat{\mu}_0$ is consistent as desired.

Next, consider $\hat{\sigma}^2$. Recall that
\[
\hat{\sigma}^2 = \frac{1}{N - L} \sum_{i=1}^L\left(\sum_{j,l} B_{il}^j v_{jl}^2 - \frac{(\sum_{j,l} B_{il}^j v_{jl})^2}{m_i}\right).
\]
Since the $v_{jl}$ are Gaussian with bounded variance and at least $L$ of them are independent, and the $B_{il}^j$ are binary, $\hat{\sigma}^2$ concentrates around its expectation as the total number of global atoms with multiple assignments $\sum_{i=1}^L I((\sum_{j,l} B_{il}^j) > 1) \rightarrow \infty$ where $I(\cdot)$ is the indicator function. This follows
by the Bernstein inequality for subexponential random variables \cite{vershynin2010introduction}. Now
\begin{align*}
\mathbb{E}\hat{\sigma}^2 &= \frac{ \sum_{i,j,l} B_{il}^j \mathbb{E} v_{jl}^2 - \sum_{i=1}^L\mathbb{E}\frac{\left(\sum_{j,l} B_{il}^j  v_{jl}\right)^2}{m_i}}{N - L}\\
&= \frac{ \sum_{i,j,l} B_{il}^j (\mu_0^2 + \sigma_0^2 + \sigma^2) - \sum_{i=1}^L \left[(\mu_0^2 + \sigma_0^2 )m_i + {\sigma^2}\right]}{N- L}\\
&= \frac{ \sum_{i,j,l} B_{il}^j \sigma^2 - \sum_{i=1}^L {\sigma^2}}{N - L}\\
&= \sigma^2.
\end{align*}
Recalling that $N$ is the total number of local parameters, hence $\sum_{i,j,l} B_{il}^j=N$ and it follows that given the $B_{il}^j$, $\hat{\sigma}^2$ is consistent.


Finally, we consider the $\hat{\sigma}_0^2$ estimate, which depends on $\hat{\mu}_0$ and $\hat{\sigma}^2$. Recall that 
\[
\hat{\sigma}_0^2 = \frac{1}{L}\sum_{i=1}^L \left(\left(\frac{\sum_{j,l} B_{il}^j v_{jl}}{m_i}-\hat{\mu}_0\right)^2 -  \frac{\hat{\sigma}^2}{m_i}\right).
\]
Note that since the $v_{jl}$ are Gaussian, for fixed $B_{il}^j$, $\hat{\sigma}_0^2$ will concentrate around its expectation if $L \rightarrow \infty$ (again by the Bernstein inequality for subexponential random variables \cite{vershynin2010introduction}).
Note further that if $\hat{\mu}_0 = \mu_0$ and $\hat{\sigma} = \sigma$, $\mathbb{E} \hat{\sigma}_0^2 = \sigma_0^2$ since the variance \[
\mathrm{Var}\left[\frac{\sum_{j,l} B_{il}^j v_{jl}}{m_i}\right] = \sigma_0^2 + \frac{\sigma^2}{m_i},
\]
which holds since the $B_{il}^j$ are binary. Hence by smoothness, if $\hat{\mu}_0$ and $\hat{\sigma}^2$ are consistent, $\hat{\sigma}_0^2$ will be as well. \qed

\section{Hyperparameter estimation quality}
\label{supp:hypers_quality}
Using our simulated experiments we verify the correctness of our hyperparameter estimation procedure and statement of Theorem 1. In Figure \ref{fig:hyperparams_estimation} we vary $\sigma$ and measure relative estimation error, which is defined as absolute error normalized by the true value. In this experiment we utilized k-means estimates of the local centroids for SPAHM. We see that when the variance of local centroids with respect to the global ones is not too big, SPAHM produces high quality estimates as well as precision in recovering true number of global parameters. It is interesting to note the almost exact $\sigma$ recovery --- this is because estimate for $\sigma$ presented in Section \ref{sec:Gauss} of the main text did not require the assumption that $\sigma_0^2 + \sigma^2/m_i \approx \sigma_0^2\,\ \forall i$ and was derived exactly. For other hyperparameters the assumption was needed and appears to introduce minor bias. Additionally we note that hyperparameters $\alpha$ and $\gamma_0$ need to be set by the modeler as they represent prior beliefs regarding the amount of sharing of global parameters (i.e. $\alpha$) across datasets and their quantity (i.e. $\gamma_0$). In all our experiments we set $\alpha=1$ and $\gamma_0=1$, except sparse Gaussian process experiment where we set $\gamma_0=50$ as we expected larger number of inducing points needed to model weather across all 50 states.

\begin{figure*}
    \centering
    \includegraphics[width=0.5\textwidth]{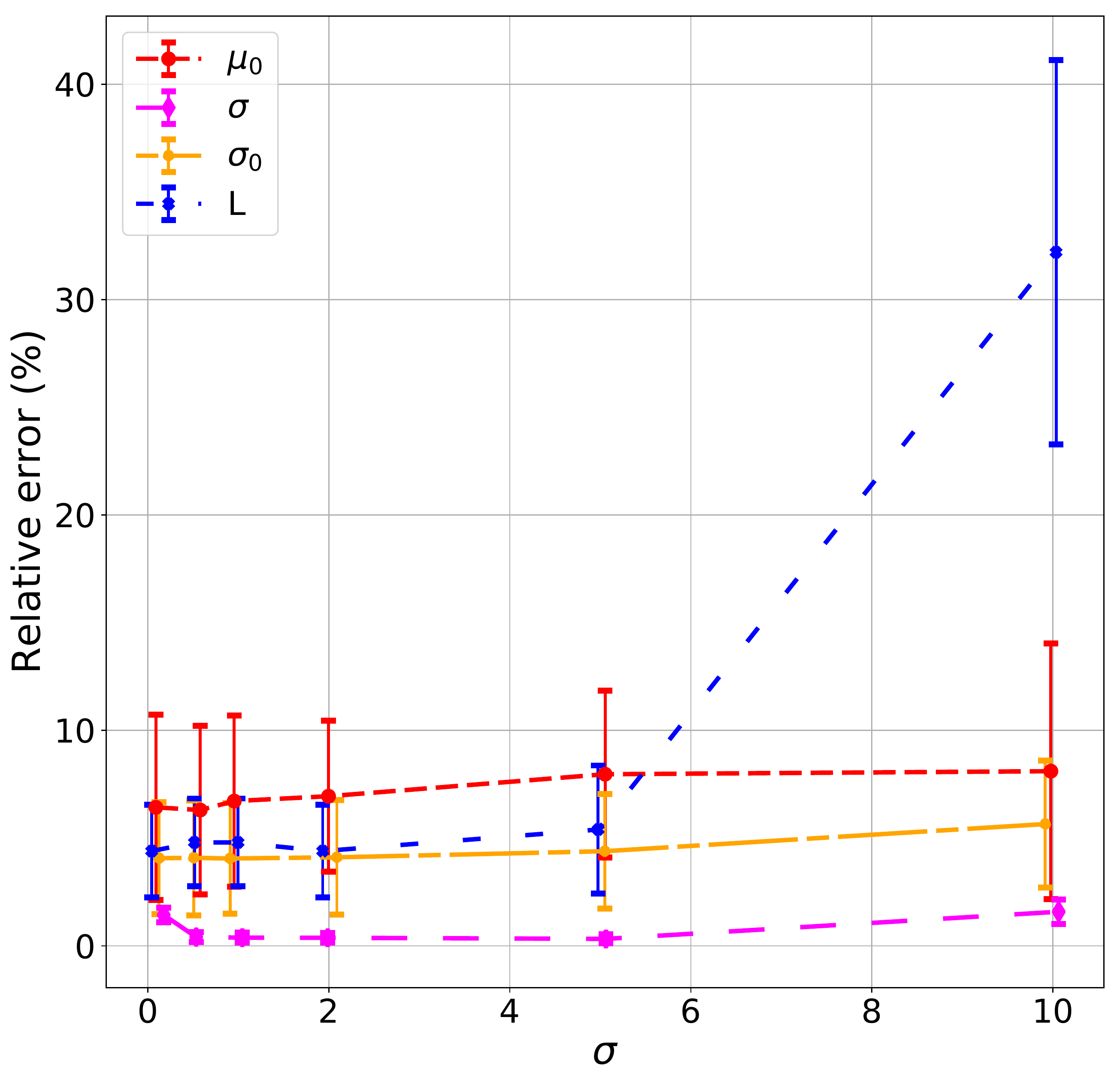}
    \vspace{-0.1in}
    \caption{Verification of the hyperparamter estimation quality}
    \label{fig:hyperparams_estimation}
\end{figure*}

\section{HDP-HMM details}
\label{supp:hdp_hmm}

Our HMM models use multivariate Normal-Wishart observation models and Hierarchical Dirichlet process allocation models. The state specific transition probabilities $\pi_k$ are drawn according to the following generative process. First, we draw $\beta \sim \text{GEM}(\gamma)$ from the stick breaking distribution. That is,
\begin{equation}
    \beta_j = \nu_j\prod_{l=1}^{j-1} (1 - \nu_l); \quad \nu_j \mid \gamma \sim \text{Beta}(1, \gamma); \quad j = 1, 2, \ldots,
\end{equation}
We then draw $\pi_k$ from a Dirichlet process with a discrete base measure shared across states, 
\begin{equation}
\pi_k \mid \eta, \kappa, \beta \sim \text{DP}(\eta + \kappa, \frac{\eta\beta + \kappa\delta_k}{\eta + \kappa}); \quad k = 1, 2, \ldots,
\end{equation}
where $\eta$ is a concentration parameter and $\kappa$ is a ``stickyness'' parameter which encourages state persistence. 
The latent states for a particular sequence then evolve as $z_t$, evolve as $z_{t+1} \mid z_t, \{\pi_k\}_{k=1}^{\infty} \sim \pi_{z_t}$. Finally, observations at time step $t$, $y_t \in \mathbb{R}^D$ are drawn from a Normal Wishart distribution,
\begin{equation}
\begin{split}
    \mu_{k} \mid \mu_0, \lambda, \Lambda_k &\sim \mathcal{N}(\mu_0, (\lambda\Lambda_k)^{-1}) \\
    \Lambda_{k} \mid S, n_0 &\sim \text{Wishart}(n_0, S) \\
    y_t \mid z_t=k &\sim \mathcal{N}(y_t \mid \mu_{k}, \Lambda_{k}^{-1})
\end{split}
\end{equation}    
For all our experiments, we set $\kappa$ to 10.0, $\gamma = 5.$ and $\eta=0.5$. For the observation model, we set $n_0 = 1$ and $S$ to an identity matrix $\mathbf{I}$, encoding our belief that $\mathbb{E}[\Lambda_k^{-1}] = \mathbf{I}$.    

\subsection {MoCAP data details}
\label{supp:mocap}
We consider the problem of discovering common structure in collections of related time series. Although such problems arise in a wide variety of domains, here we restrict our attention to data captured from motion capture sensors on joints of people performing exercise routines. We collected this data from the CMU MoCap database (\url{http://mocap.cs.cmu.edu}). Each motion capture sequence in this database consists of 64 measurements of human subjects performing various exercises. Following~\cite{fox2014joint}, we select 12 measurements deemed most informative for capturing gross motor behaviors: body torso position, neck angle, two waist angles, and a symmetric pair of right and left angles at each subjects shoulders, wrists, knees, and feet. Each MoCAP sequence thus provides a 12-dimensional time series. We use a curated subset~\cite{fox2014joint} of the data from two different subjects each providing three sequences. In addition to having several exercise types in common this subset comes with human annotated labels allowing for easy quantitative comparisons across different models.

\subsection{Metrics}
\paragraph{Normalized Hamming distance}
We follow~\cite{fox2014joint} and compute the normalized Hamming distance between the MAP segmentation and the human-provided ground truth annotation using the optimal alignment of each ground truth state to a predicted state. Normalized Hamming distance then measures the fraction of time steps where the labels of the ground-truth and estimated segmentations disagree.

\paragraph{Adjusted Rand Index}
Rand index~\cite{rand1971objective} is commonly used to measure the quality of a partition with respect to a ground truth partitioning. It is defined as,
\begin{equation}
    R = \frac{a + b}{\binom{n}{2}},
\end{equation}
where a is the number of pairs of elements that are in the same subset in the two partitions and b is the number of pairs of elements that are in different subsets in the two partitions. The denominator is the total number of pairs. The adjusted rand index is computed as,
\begin{equation}
    ARI = \frac{R - \mathbb{E}(R)}{\max(R) - \mathbb{E}(R)}
\end{equation}

\subsection{Additional Results}
\label{supp:hmm_figure}
In Figure~\ref{fig:hmm_supplement}, we present additional matched states discovered by SPAHM.
\begin{figure}
    \centering
    \includegraphics[width=\textwidth]{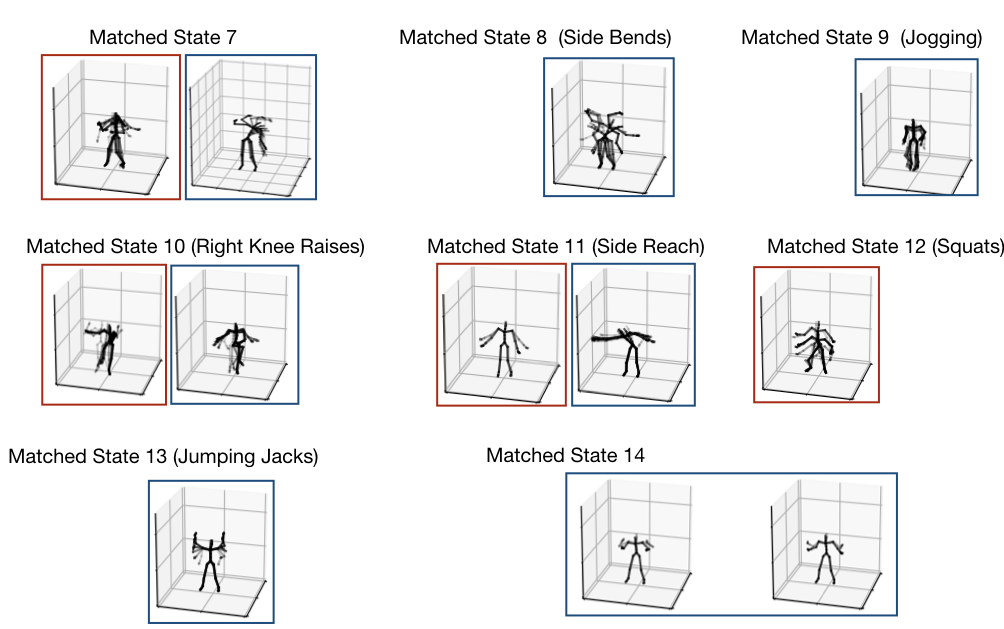}
    \caption{Additional matched states discovered by SPAHM}
    \label{fig:hmm_supplement}
\end{figure}

\section{Topic Modeling Experiments}
\label{supp:topics}
We also evaluate SPAHM on the task of topic modeling. Here, we randomly select 40 books from Project Gutenberg \footnote{\url{https://www.gutenberg.org}}, primarily in 2 unrelated domains to introduce heterogeneity in the data - World War I and Astronomy. For modeling using SPAHM, we first extract 25 topics from each book using k-means (independently for each book), and then match the topics extracted from each book to produce the global topics for the entire corpus. As a baseline measure, we compare our method to Gaussian LDA \cite{das2015gaussian} trained on the whole corpus of 40 books. Unlike SPAHM, Gaussian LDA requires the number of global topics to be specified a priori. To keep the evaluations fair, we train the Gaussian LDA for 100 MCMC iterations to extract 150 topics which is similar to the number of topics extracted by SPAHM (155 topics).

We quantitatively compare the two approaches by calculating the UCI coherence \cite{newman2010automatic} scores over the Gutenberg dataset \cite{lahiri:2014:SRW} consisting of 3000 books separate from the ones used to train the two models. The coherence score along with the time taken by each model is presented in Table \ref{tbl:topic_modeling}, where higher number implies more coherent topics. 
As a qualitative measure of the topics found by the two methods, we select a common topic related to "war" found by both SPAHM and Gaussian LDA, and look at the closest 15 words to the topic found by each method. We observed that while the general theme of the topic is captured by both models, Gaussian LDA topic consists of uninformative words such as "taken", "brought", "took", among the informative words like "army", "military", "command". On the other hand, 14 of the top 15 words extracted by SPAHM are very relevant to the topic of "war". We present the corresponding topics in Figure \ref{fig:supp_topic_model_war} (same as in the main text).

\begin{table}
  \caption{Coherence scores and runtimes for estimating Gutenberg topics}
  \label{tbl:topic_modeling}
  \centering
  \begin{tabular}{lll}
    \toprule
          & SPAHM     & Gaussian LDA \\
    \midrule
    UCI Coherence score & -2.0967  & -4.5956     \\
    Runtime     & 42 sec & $\sim$ 600 sec/iteration      \\
    \bottomrule
  \end{tabular}
\end{table}

\begin{figure}
    \centering
    \includegraphics[width=\textwidth]{figures/topic_model.pdf}
    \caption{Topic related to war found by SPAHM and Gaussian LDA. The five boxes pointing to the Matched topic represent local topics that SPAHM fused into the global one. The headers of these five boxes state the book names along with their Gutenberg IDs.}
    \label{fig:supp_topic_model_war}
\end{figure}

\end{document}